\def\chr#1{{\color{red}#1}}
\def\chb#1{{\color{blue} #1}}
\newcommand{\cO}{{\cal O}}
\newtheorem{corollary}{Corollary}[section]
\newtheorem{lemma}{Lemma}[section]
\newtheorem{proposition}{Proposition}[section]
\newtheorem{assumption}{Assumption}[section]
\newcommand{\houcine}[1]{\todo[inline]{\textbf{Houcine: }#1}} 
\newcommand{\hajar}[1]{\todo[inline]{\textbf{Hajar: }#1}} 
\newcommand{\ouiame}[1]{\todo[inline]{\textbf{Ouiame: }#1}} 
\newcommand{\algorithmicserver}{\textbf{Global Server}}
\newcommand{\algorithmicclient}{\textbf{Client k}}
\newcommand{\brokenline}[2][t]{\parbox[#1]{\dimexpr\linewidth-\ALG@thistlm}{\strut\raggedright #2\strut}}
\newcommand\norm[1]{\left\lVert#1\right\rVert}
\renewcommand{\fnum@figure}{Figure \thefigure}
\renewcommand{\fnum@table}{Table \thetable}
\begin{document}
\title{
Client Selection in Federated Learning based on Gradients Importance
}

\author{
    \IEEEauthorblockN{Ouiame Marnissi, Hajar EL Hammouti, El Houcine Bergou
        }
    \IEEEauthorblockA{ School of Computer Science, Mohammed VI Polytechnic University, Ben Guerir, Morocco.\\
\{ouiame.marnissi, hajar.elhammouti, elhoucine.bergou\}@um6p.ma
}
}
\maketitle

\begin{abstract}












Federated learning (FL) enables multiple devices to collaboratively learn a global model without sharing their personal data. In real-world applications, the different parties are likely to have heterogeneous data distribution and limited communication bandwidth. 
In this paper, we are interested in improving the communication efficiency of FL systems. We investigate and design a device selection strategy based on the importance of the gradient norms.  In particular, our approach consists of selecting devices with the highest norms of gradient values at each communication round. We study the convergence and the performance of such a selection technique and compare it to existing ones. We perform several experiments with non-iid set-up. The results show the convergence of our method with a considerable increase of test accuracy comparing to the random selection. 

\end{abstract}

\begin{IEEEkeywords}
Federated learning, Gradient descent, Non-iid data
\end{IEEEkeywords}
\section{Introduction} \label{sec:intro}
Machine learning (ML) has emerged as a promising technique that captures the data patterns and performs accurate predictions~\cite{lecun1998gradient}. In a classic ML setup, data is collected from different sources. It is uploaded to a centralized server, processed, and then used to train ML algorithms. However, this centralized framework results in two major issues. First, sharing data with a centralized entity may compromise the user's privacy. Second, due to the huge volumes of shared information, the task of uploading raw data through the network is prohibitively expensive, and sometimes unpractical.

In this context, FL has been developed to preserve the user's privacy and reduce the amount of transmitted information~\cite{McMahanMRA16,wahab2021federated}. Indeed, instead of sharing raw data, ML models are trained locally on devices, and only their parameters are sent to a centralized server. As a consequence, the data of the users is preserved, and less information is transmitted through the network. One of the most popular FL algorithms is \textit{federated averaging} \cite{McMahanMRA16}. In order to train a global ML model, federated averaging let a subset of users perform local trainings. After a number of local iterations, the local gradients are sent to the central server for aggregation. These steps are repeated until a stable global model is obtained or a target accuracy is achieved. Figure \ref{flprocess} describes this process.
\begin{figure*}
    \centering
    \includegraphics[scale=0.45]{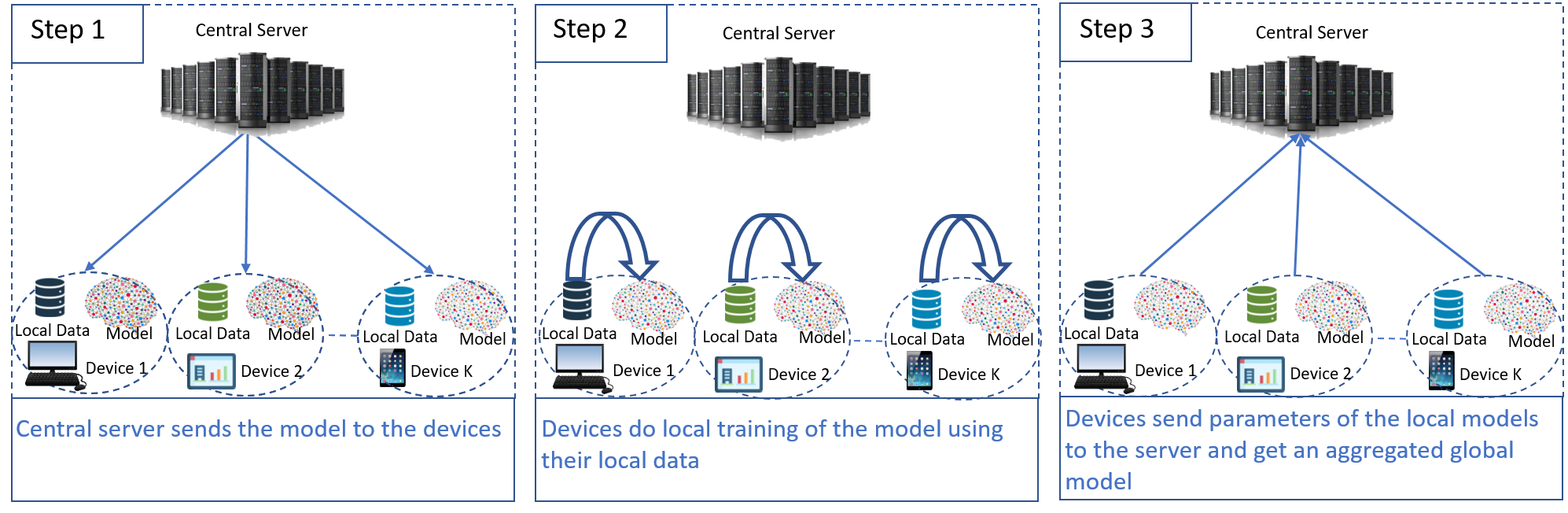}
    \caption{Federated Learning Process}
    \label{flprocess}
\end{figure*}

However, current ML applications (e.g., image and voice recognition, word prediction) involve high-dimensional ML models. As a consequence, large gradient vectors are sent over resource-constrained networks~\cite{konevcny2016federated}. To reduce the communication overhead, compression and quantization techniques have been proposed~\cite{aji2017sparse, Grace}. The main purpose of these techniques is to reduce the size of transmitted vectors while maintaining  good performance of FL algorithms. Another approach to preserve the scarce network resources is to efficiently select the subset of participating devices. Not only partial client participation uses limited communication bandwidth, but when optimally designed, it can also accelerate the FL convergence and minimize the computational resources. However, while convergence for full client participation with arbitrarily heterogeneous data is guaranteed, the convergence of partial device participation is challenging and depends heavily on the selection approach. 

For example, in~\cite{nishio2019client}, the authors propose a greedy algorithm to maximize the number of selected clients based on their computational and communication capabilities. In particular, at each communication round, devices with the minimum transmission and computation times are selected. While the proposed technique provides promising experimental results, it is not supported by any convergence analysis. A more realistic scenario is studied in~\cite{ruan2021towards}. In fact, the authors study the case where devices are unable to complete the learning task due to energy depletion or connectivity disruption. They propose a federated averaging scheme where the aggregation is weighted by probabilities of devices being inactive at a given communication round. Another selection approach is proposed in~\cite{ribero2020communication}. The approach suggests that clients with the most significant local updates are selected. The scheme is combined with Ornstein-Uhlenbeck process to estimate the updates of clients that do not communicate their updates to the server. 
The closest work to ours is the one described in~\cite{ref3}. It is shown there that a high accuracy can be achieved when the selection of devices is biased towards clients with the highest loss values. Unlike~\cite{ref3}, our approach is based on gradients comparison. In particular, clients with the most impactful gradient norm values are selected. The main virtue of such an approach is that it can accelerate the convergence time with a reduced computation complexity.

In this paper, we address the problem of device selection in a resource-constrained network. In particular, we answer the question: how to select a limited number of devices in order to accelerate the convergence of the FL algorithm? Indeed, we propose an efficient selection technique whereby the subset of participating devices is determined based on the norm of their gradients. The remainder of this paper is organized as follows. We describe the system model and state the learning problem in section II. In section III, we introduce our selection method. We also investigate its convergence and provide insights about the convergence time. Finally, in section IV, we provide extensive simulation results to show the performance of our proposed approach. Our selection scheme is compared with two other selection techniques: highest loss and random selection approaches.


\section{System Model}
\begin{figure}
    \centering
    \includegraphics[scale=0.3]{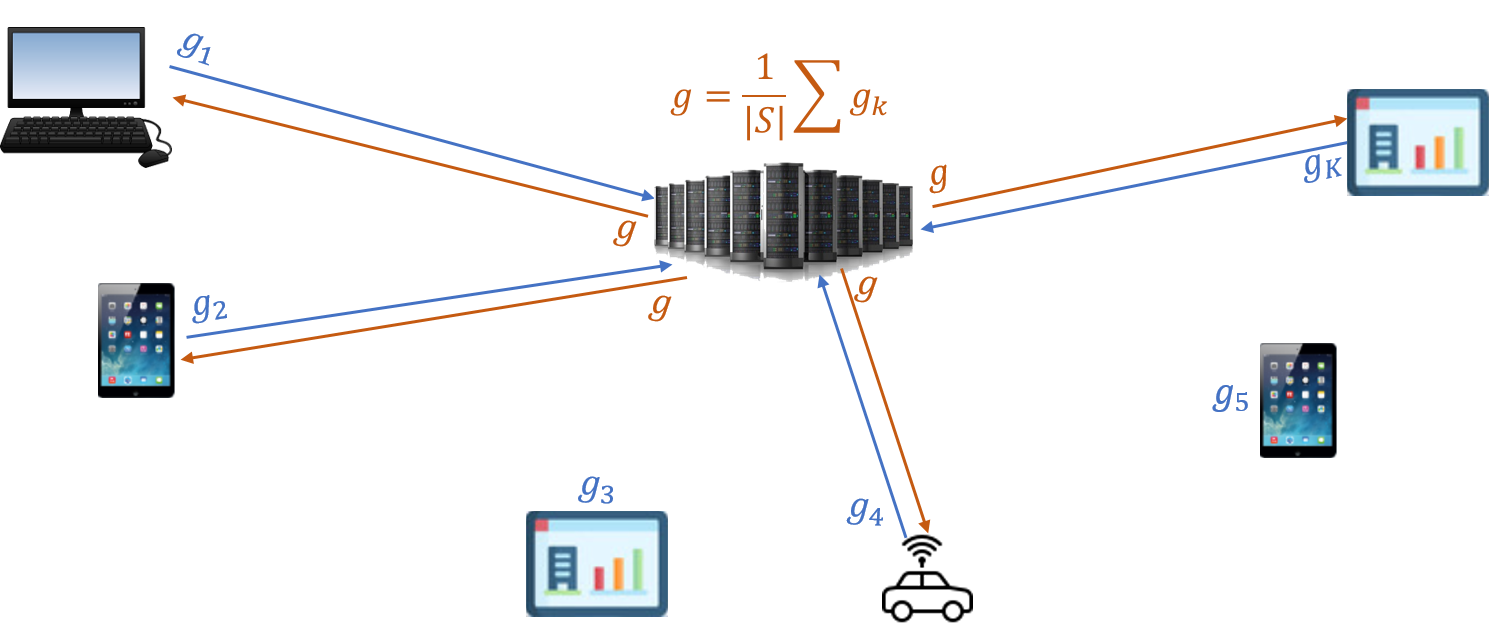}
    \caption{System Model}
    \label{sysmodel}
\end{figure}
We consider a number $K$ of connected devices that are able to communicate with a central server as in Figure \ref{sysmodel}. Each device $k$ has a local dataset $D_k$ with size $|D_k|$. We denote by $D=\cup_{k=1}^{K}D_k$ the total dataset over all devices. In order to train the ML model, a global objective function $f$ is minimized. Let $ w\in \mathbb{R}^d$ be the ML parameters vector, where $d$ is the dimension of the global model. In a typical FL, $f$ is given by the average loss over data samples at all devices $k$. Therefore, $f$ can be written as 
\begin{equation*}
f(w)=\frac{1}{\sum\limits_{k=1}^K|D_k|}\sum \limits_{k=1}^{K}f_k(w),
\end{equation*} 
where $f_k$ is the local loss function of device $k$. 

The loss function is minimized using a stochastic gradient descent method. The latter method proceeds iteratively where, at each iteration $t$, devices perform local computations of the stochastic gradients based on their local data.  Then,   
 to update the parameter vector $w$, a subset $\mathcal{S}$ of clients transmit their local gradients to the server for aggregation. The global model at iteration $t$  is updated as follows

\begin{equation}\label{GD}
     w^{t+1}= w^{t}-\eta g(w^t),
\end{equation} 
where $\eta>0$ is the learning rate, 
$g(w^t) = \frac{1}{|S^t|} \sum_{i \in \mathcal{S}^t}g_i(w^t)$, $g_i(w^t)$ is  the stochastic gradient computed by client $i$ at $w^t$ and $\mathcal{S}^t$ is the subset of selected clients at iteration $t$. Our objective is to select the subset  $\mathcal{S}^t$ efficiently so that the convergence of the FL algorithm is ensured. In the following, we describe how this subset is selected.

\section{Client Selection Approach}

In this section, we propose an efficient client selection technique to accelerate the convergence of the FL. First, we describe how our approach can be deployed within the federated averaging framework. Second, we show that our approach is guaranteed to converge when a single client, the one with the highest gradient norm, is selected at each communication round. 

\subsection{Algorithm description}

Assume the standard FL algorithm. At a given communication round $t$, the server broadcasts the parameters of the global model $w^t$ to all the clients. The clients upload the updated ML model. They compute local gradients based on their local datasets. Each client computes the Euclidean norm of its local gradient. The norm values are sent to the central server which determines, based on the best norm values, the subset of clients participating in the next iteration. Once the subset of participating clients is designated by the server, the clients send their local gradients for aggregation. In Algorithm 1, we present the pseudo-code of the proposed selection approach.

The proposed approach updates the selected clients at each communication round. This allows for better training of the ML model. It also reduces the communication and computation burden at the clients. Furthermore, although the proposed approach leads to additional communication exchange between the server and the clients, the amount of additional information (the gradient norms) is almost negligible when compared to the high dimensional gradient vectors. It is important to note that the central server acts as a coordinator that ensures the selection of the clients. However, to reduce the communication time with a faraway server, the coordination task can also be carried out by any client. The main objective of the coordinator is to compare the gradient norms and determine the fraction of the users with the highest norm values. Intuitively, the users with the highest gradient norms are, most probably, the ones with the most skewed local models. The selection of these clients allows the global model to include their datasets in the training, which helps achieve better accuracy. 

To obtain the gradients, one has not necessarily to compute the loss values. Gradients can be estimated 
directly from the ML model. Recomputing the loss function after each update can be time and resource consuming. This is particularly important when participating clients are Internet of Things devices with limited communication and computation capabilities. For example, for many ML models such as polynomial and logistic regressions, the expression of the gradient is known beforehand. This expression can be used directly to obtain the gradients and update the weights without the need to compute the loss function at each iteration.


\begin{algorithm}
	\caption{Gradient Based Selection for FL} 
	\label{alg:GBSFL}
	\begin{algorithmic}[1]\\
	$K$: number of devices indexed by $k$,
	$S$: Subset of selected clients, $\eta$: learning rate \\
	Initialize $w^0$
	
		 \For {$t=0,1,\ldots,T  $ communication rounds$ $ }
			\Server{}
				\State Send the global model $w^t$ to the clients 
				\State \brokenline{%
				Select $C$ devices based on the $C$ highest gradient norm  values}
				\State \brokenline{%
				Average the gradients of the selected devices $g(w^t) = \frac{1}{|S^t|} \sum_{i \in \mathcal{S}^t}g_i(w^t)$}
                \State Update the global model as in equation (\ref{GD})
            \EndServer
		    \Client{}	
			    \State Compute the gradient $g_k(w^t)$
			 \EndClient{}
		\EndFor
	\end{algorithmic} 
\end{algorithm}

\subsection{Convergence analysis}
In the following, we present the convergence analysis when the client with the highest gradient norm is selected. In the rest, we denote by $\norm{x}$ the vector $x$'s Euclidean norm.

\begin{lemma}
With our proposed strategy, i.e., select the client with the highest norm, we choose the biggest possible step in norm to move to the next iteration, i.e, 
\begin{equation*}
\begin{aligned}
\norm{w^{t+1}-w^t} &=  \underset{\Delta w}{\text{max}}
 \norm{\Delta w} \\
& 
  \Delta w \sim \eta g_k(w^t), \;k \in \{ 1, \ldots, K\}.
\end{aligned}
\end{equation*}
\end{lemma}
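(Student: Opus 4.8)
The plan is to unwind the definition of one federated step in the special case where a single client is selected, and then observe that maximizing the displacement norm over the single-client choices is equivalent to maximizing the gradient norm, which is precisely what the proposed rule does. The statement is essentially a consequence of the absolute homogeneity of the Euclidean norm together with the update rule~\eqref{GD}, so the argument is short and direct rather than technical.

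First I would specialize the aggregation step to the single-client regime. When exactly one client $k$ is selected, $|\mathcal{S}^t| = 1$ and $\mathcal{S}^t = \{k\}$, so the aggregated gradient reduces to $g(w^t) = g_k(w^t)$. Substituting into~\eqref{GD} gives $w^{t+1} = w^t - \eta\, g_k(w^t)$, hence the displacement associated with choosing client $k$ is
\begin{equation*}
\Delta w = w^{t+1} - w^t = -\eta\, g_k(w^t).
\end{equation*}
Taking Euclidean norms and using $\eta > 0$ together with the homogeneity $\norm{-\eta\, x} = \eta \norm{x}$ yields $\norm{\Delta w} = \eta \norm{g_k(w^t)}$.

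It then remains to compare these candidate step sizes across clients. Since $\eta$ is a fixed positive scalar independent of $k$, the map $k \mapsto \eta \norm{g_k(w^t)}$ is maximized at exactly the same index that maximizes $\norm{g_k(w^t)}$; that is,
\begin{equation*}
\underset{k \in \{1,\ldots,K\}}{\max}\, \norm{\Delta w} = \eta \underset{k \in \{1,\ldots,K\}}{\max}\, \norm{g_k(w^t)}.
\end{equation*}
By construction, the proposed strategy selects the client $k^\star = \arg\max_k \norm{g_k(w^t)}$, so the step actually taken has norm $\eta \norm{g_{k^\star}(w^t)}$, which coincides with the maximum on the right-hand side. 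This establishes the claimed identity.

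I do not expect a genuine obstacle here: the result is a definitional observation, and the only point requiring minor care is ensuring that the ``$\max$'' is over the finite, well-defined set of single-client displacements $\{-\eta\, g_k(w^t) : k \in \{1,\ldots,K\}\}$, so that the argmax exists and the homogeneity argument applies cleanly. I would also make explicit the informal notation $\Delta w \sim \eta g_k(w^t)$ in the statement, reading it as ``$\Delta w$ ranges over the admissible single-client steps,'' to avoid any ambiguity about what is being maximized.
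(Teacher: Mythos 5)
Your proposal is correct and follows essentially the same route as the paper's own proof: specialize the update \eqref{GD} to a single selected client so that $w^{t+1}-w^t=-\eta\,g_k(w^t)$, use the positive homogeneity of the Euclidean norm to get $\norm{w^{t+1}-w^t}=\eta\norm{g_{k}(w^t)}$, and observe that the argmax of $\norm{g_k(w^t)}$ coincides with the argmax of $\eta\norm{g_k(w^t)}$. Your added remarks on the finiteness of the candidate set and on reading the informal notation $\Delta w \sim \eta g_k(w^t)$ are sensible clarifications but do not change the argument.
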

\begin{proof}
We have at iteration $t$: 
$w^{t+1} = w^t - \eta g_i(w^t)$ where $i$ is the index of the client with the highest gradient norm, i.e,
\begin{equation*}
\begin{aligned}
i &=  \underset{}{\text{arg max}}
 \norm{g_k(w^t)} \\
&    k \in \{ 1, \ldots, K\}.  
\end{aligned}
\end{equation*}
Hence,  
\begin{equation*}
\norm{w^{t+1}-w^t} =  \eta \norm{g_i(w^t)} = \max_{k \in \{ 1, \ldots, K\}} \eta \norm{g_k(w^t)}.
\end{equation*}
\end{proof}

Before showing the convergence result for Algorithm \ref{alg:GBSFL}, we  state the general assumptions we make (several of which are classical ones). 
We denote by $i$ the device with the highest gradient norm at iteration $t$.   
\begin{assumption}
\label{ass:lowerbound}
f is lower bounded by $f^*$.
\end{assumption}

\begin{assumption}
\label{ass:smoothness}
There exists $L>0$ such that 
f is $L$-smooth, i.e, for all $x$ and $y$:
 \begin{align*}
f(x)-f(y)-\nabla f(y)^T(x-y)\leq \frac{L}{2}\norm{x - y}^2
\end{align*}
\end{assumption}

\begin{assumption}
\label{ass:sgdbound}
There exists $G>0$ such that the 
stochastic gradient $g_i$ is bounded by $G$. i.e, 
$\norm{g_i(w^t)} \le G$ for all $w^t.$
\end{assumption}
The following  assumption lower bounds the expected inner product of the stochastic gradient ${g}_i(w^t)$ with the gradient $\nabla f(w^t)$ with a positive quantity depending on a power of the gradient norm while allowing a small residual on the lower bound. 
\begin{assumption}\label{ass:decreas}
There exists $\mu>0$ such that 
\begin{equation}  
\textstyle \mathbb{E} \left[{g}_i(w^t)^\top \nabla f(w^t)\right] \ge \mu   \|\nabla f(w^t)\|^2 +  R_t,
\end{equation}
where  $R_t$ is a small scalar residual which may appear due to the numerical inexactness of some operators or due to other computational overheads. 
\end{assumption}
The latter assumption generalizes the unbiasedness assumption on the stochastic gradient. In fact, if we assume the unbiasedness  of the stochastic gradient then the previous assumption holds trivially with $\mu=1$ and $R_t = 0$.
A similar assumption was proposed in \cite{ stochastic_threepoint2020, layer_wise2020} for the biased stochastic gradient descent.





We now state the convergence complexity result for Algorithm \ref{alg:GBSFL}. We mainly show similar complexity bounds known for baseline SGD and its variants. 
\begin{proposition}
\label{prop:conv}
Let Assumptions \ref{ass:lowerbound}, \ref{ass:smoothness}, \ref{ass:sgdbound} and \ref{ass:decreas}  hold, then 
 \begin{align*}
\frac{1}{T+1}\sum_{t=0}^T\mathbb{E} \norm{\nabla f(w^t)}^2 \leq \frac{f(w^0) - f^*}{(T+1) \eta \mu} + \frac{R_T}{\mu}  + \frac{L}{2 \mu}\eta G^2,
\end{align*}
where $R_T = \frac{1}{T+1}\sum_{t=0}^T R_t.$
\end{proposition}
\begin{proof}

Using Assumption \ref{ass:smoothness} we get
\begin{align}
\label{eq:smoothineq}
f(w^{t+1}) \leq f(w^t) -\eta\nabla f(w^t)^T  g_i(w^t) + \frac{L}{2} \eta^2 \norm{ g_i(w^t)}^2.
\end{align}
By taking the expectation conditional to $w^t$ and using Assumption \ref{ass:decreas} we obtain
 \begin{align*}
\mathbb{E}[f(w^{t+1})|w^t] \leq f(w^t) -\eta \mu \norm{\nabla f(w^t)}^2 + \eta R_t + \frac{L}{2} \eta^2 G^2.
\end{align*}
By taking now the expectation on the last inequality and rearranging  the terms, we have 
\begin{align*}
\mathbb{E} \norm{\nabla f(w^t)}^2 \leq  \frac{\mathbb{E} f(w^t) - \mathbb{E} f(w^{t+1})}{\eta \mu} +\frac{R_t}{\mu} + \frac{L}{2 \mu}\eta G^2.
\end{align*}
By summing over $t$ from $0$ to $T$ and using the telescopic sum we get 
\begin{align*}
\sum_{t=0}^T \mathbb{E} \norm{\nabla f(w^t)}^2 &\leq  \frac{ f(w^0) - \mathbb{E} f(w^{T+1})}{\eta \mu} + \sum_{t=0}^T \frac{R_t}{\mu} \\
&+ \frac{(T+1)L}{2 \mu}\eta G^2.
\end{align*}
From assumption~\ref{ass:lowerbound}, we have $f(w^{T+1}) \ge f^*$, thus 
\begin{align*}
\sum_{t=0}^T \mathbb{E} \norm{\nabla f(w^t)}^2 \leq  \frac{ f(w^0) -  f^*}{\eta \mu} + \sum_{t=0}^T \frac{R_t}{\mu} + \frac{(T+1)L}{2 \mu}\eta G^2.
\end{align*}
To conclude, we simply divide the last inequality by $T+1$.

\end{proof}

\begin{corollary}
Let Assumptions \ref{ass:lowerbound}, \ref{ass:smoothness}, \ref{ass:sgdbound} and \ref{ass:decreas}  hold. If $\eta = \cO \left(\tfrac{1}{\sqrt{T+1}}\right)$ and $R_T = \cO \left(\tfrac{1}{\sqrt{T+1}}\right)$ then 
\begin{eqnarray*}
 \min_{t\in[0,\ldots,T]}\mathbb{E}\|\nabla f(w^t)\|^2 \leq \cO\left(\tfrac{1}{\mu \sqrt{T+1}}\right). 
\end{eqnarray*}
\end{corollary}
\begin{proof}
We have 
$$ \min_{t\in[0,\ldots,T]}\mathbb{E}\|\nabla f(w^t)\|^2\leq \frac{1}{T+1} \sum_{t=0}^T\mathbb{E} \|\nabla f(w^t)\|^2.$$
The rest  is direct from the previous proposition. 
\end{proof}
From the above corollary we can observe that 
$\min_{t\in[0,\ldots,T]}\mathbb{E}\|\nabla f(w^t)\|^2$ converges to zero with the rate $1/\sqrt{T+1}$ which is the same as the classical rate known in the litterature for baseline SGD and its variants. 

We note that in the analysis, for simplicity, we use a fixed learning rate.  One can easily derive the convergence
of Algorithm \ref{alg:GBSFL} by choosing a sufficiently small or decreasing
learning rate, similar to the classical analysis of SGD.

In the next section, we show empirically the performance of our client selection strategy when a subset of devices is selected.

\section{Simulation Results}
\begin{figure}
\centering
\subfloat[Test Accuracy\label{testaccMN25}]{%
  \includegraphics[width=0.43\textwidth]{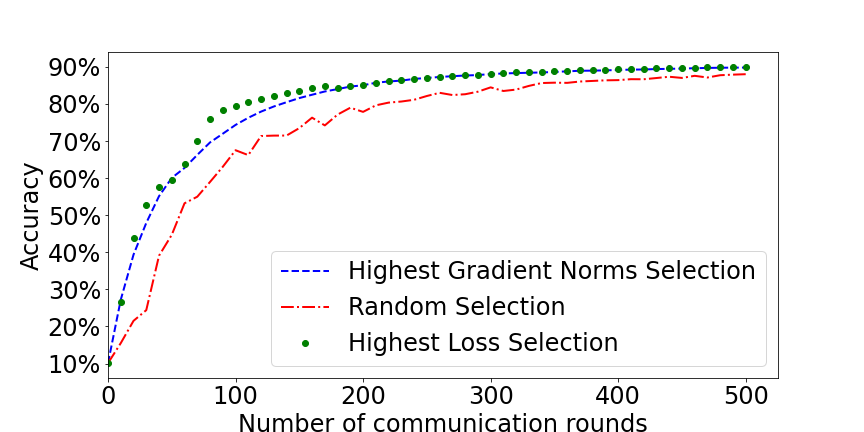}%
  }\par
\subfloat[Train Loss\label{trainlossMN25}]{%
  \includegraphics[width=0.43\textwidth]{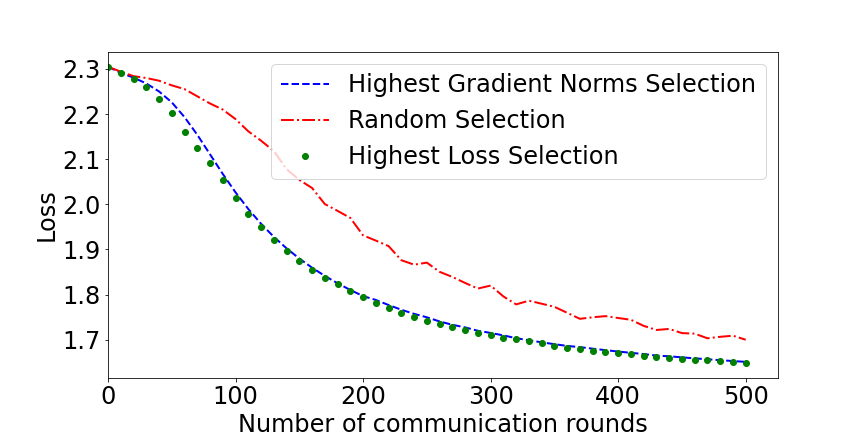}%
  }
\caption{Test accuracy and train loss on MNIST dataset, 25 selected devices, $\beta = 0.3$}
\label{MN25}

\end{figure}

\begin{figure}
\centering
\subfloat[Test Accuracy\label{testaccMN25}]{%
  \includegraphics[width=0.43\textwidth]{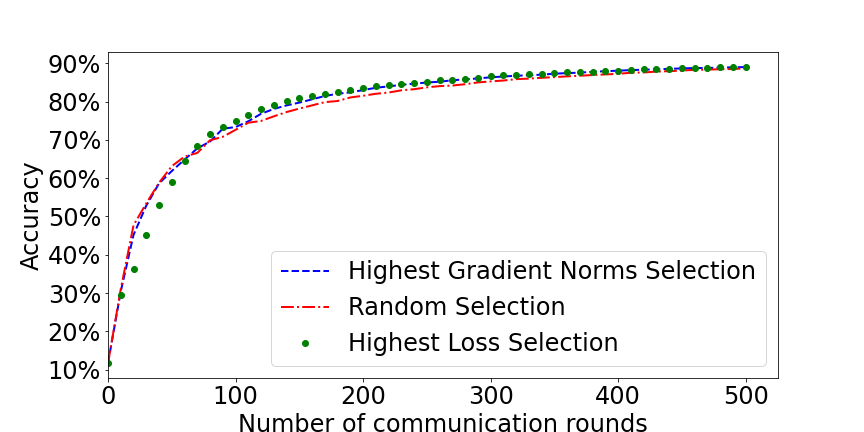}%
  }\par
\subfloat[Train Loss\label{trainlossMN25}]{%
  \includegraphics[width=0.43\textwidth]{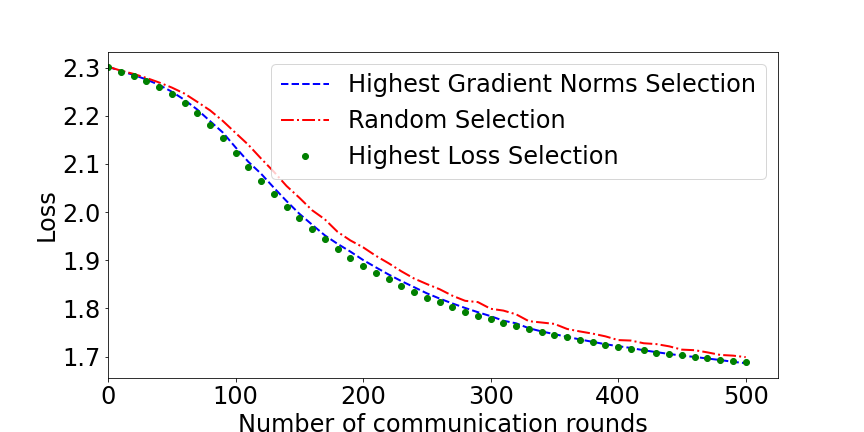}%
  }
\caption{Test accuracy and train loss on MNIST dataset, 25 selected devices, $\beta = 5$}
\label{B5MN25}

\end{figure}

\begin{figure}
\centering
\subfloat[Test Accuracy\label{testaccFm25}]{%
  \includegraphics[width=0.43\textwidth]{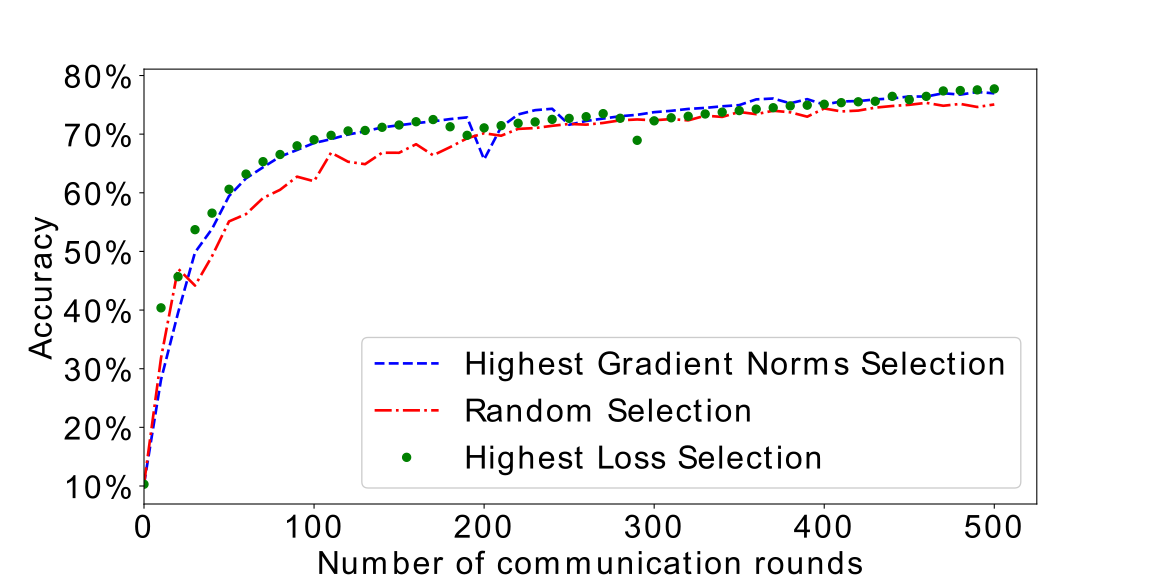}%
  }\par
\subfloat[Train Loss\label{trainlossFm25}]{%
  \includegraphics[width=0.43\textwidth]{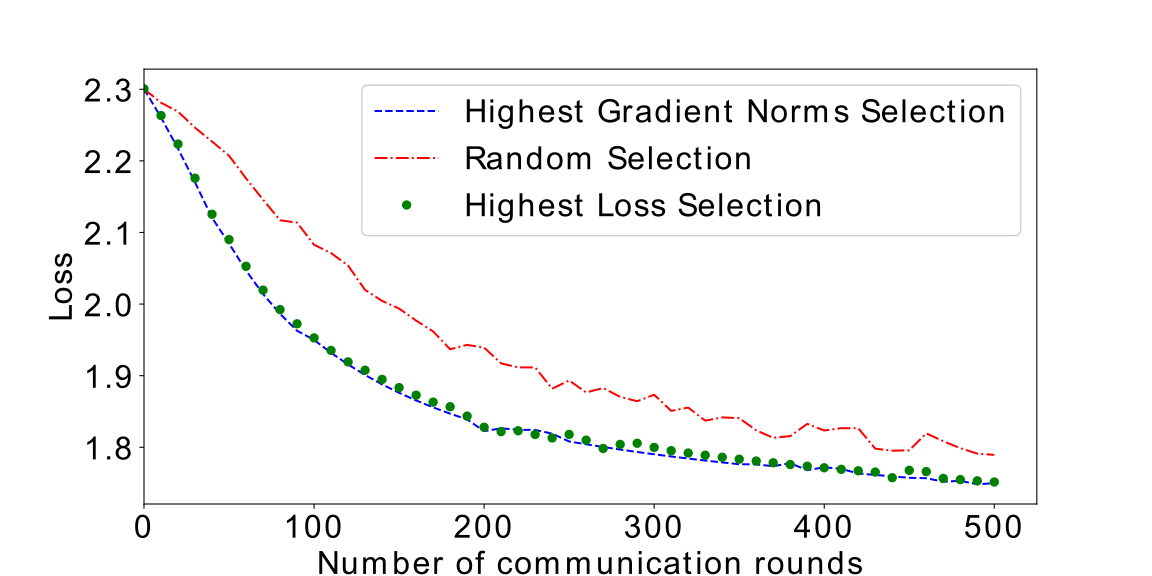}%
  }
\caption{Test accuracy and train loss on FMNIST dataset, 25 selected devices}
\label{FM25}
\end{figure}
In this section, we assess the performance of our proposed approach and compare it with random selection and highest loss selection approaches. We also give insights about the optimal number of devices to be selected following our strategy.
\subsection{Simulation Setup and performance evaluation}
Our experiments are conducted using Keras with Tensorflow. The plots in this paper are the results using SGD optimizer. We also performed experiments using ADAM optimizer and the results are similar to those obtained using SGD. We explore different datasets.
\begin{itemize}
  \item \textbf{MNIST}. We train a $3$ layers Multilayer Perceptron (MLP) with $199,210$ parameters on a non-iid partitioned MNIST~\cite{ref7}, a dataset of hand-written numbers from $0$ to $9$.
  \item \textbf{FMNIST}. We train a $3$ layers MLP with $199,210$ parameters on a non-iid partitioned FMNIST~\cite{ref8}, a dataset of fashion products from $10$ categories.
  \item \textbf{CIFAR-10}. We train a $3$ layers MLP with $656,810$ parameters on a non-iid partitioned CIFAR-10~\cite{CF10}, a dataset of colored images from $10$ categories.
\end{itemize}
We follow an approach similar to the one described in ~\cite{ref9} to partition the data in a non-iid manner between the devices. We use Dirichlet distribution $Dir_K(\beta)$ to allocate different amounts of data samples (quantity skew) and different labels (label distribution skew) across devices. $\beta$ is the concentration parameter ($\beta > 0$) used to control the degree of data imbalance level. A small $\beta$ implies large data heterogeneity. In our experiments, we select the learning rate 
by using a grid search. We select $25$ devices from $100$ and perform $500$ iterations. We compare our approach with the random selection and the highest loss selection. For the random approach, we perform $5$ runs and present the average metric. For MNIST dataset, we use two different values of $\beta$ to assess the impact of the data heterogeneity. When $\beta = 0.3$, i.e., large data heterogeneity, our approach gives quite similar results to the highest loss selection strategy and outperforms the random one as shown in Figure~\ref{MN25}. For example, at iteration $150$, we obtain an increase of accuracy of $14\%$ and a decrease of loss of $8\%$ using our strategy versus the random one.  However, when $\beta = 5$, i.e., less data heterogeneity, the random approach performs almost as good as its opponents, as shown in Figure~\ref{B5MN25}. This is due to the fact that when devices have similar data, their gradients are also similar. We confirm our findings by running experiments on FMNIST dataset as in Figure~\ref{FM25} and on CIFAR-10 as in Figure~\ref{CF25} although the performance is poor in the latter. Indeed, after hundreds of rounds, we could barely reach the $50\%$ accuracy. Due to space limitations, we only display results for small $\beta = 0.3$ for both FMNIST and CIFAR-10 datasets.
\begin{figure}
\centering
\subfloat[Test Accuracy\label{CFtestAcc25}]{%
  \includegraphics[width=0.43\textwidth]{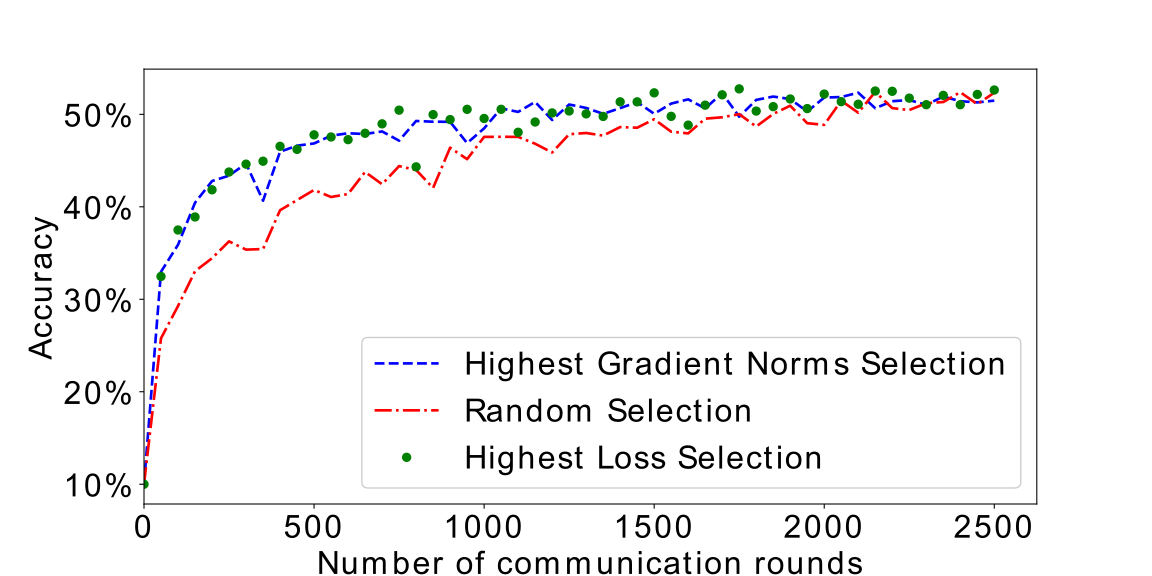}%
  }\par
\subfloat[Train Loss\label{CFTrainloss25}]{%
  \includegraphics[width=0.43\textwidth]{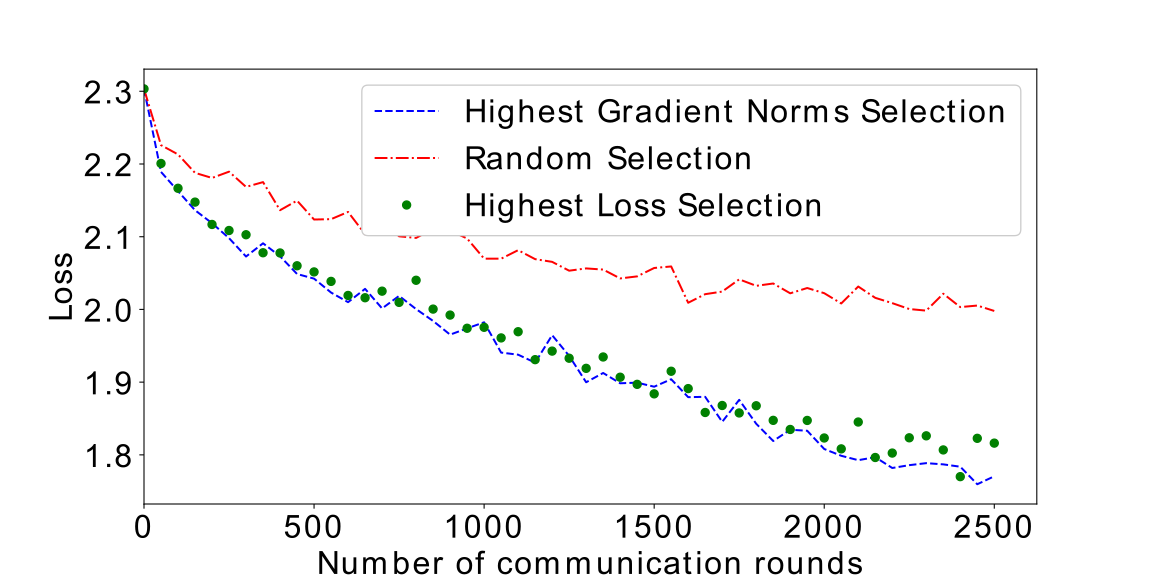}%
  }
\caption{Test accuracy and train loss on CIFAR-10 dataset, 25 selected devices}
\label{CF25}
\end{figure}
\subsection{Comparison between the highest norm selection and the highest loss selection}
For our proposed approach, the selection is based on the gradients calculated by the devices with the purpose to be shared with the central server. As a consequence, for some ML models where gradients can be estimated directly, there is no need to do any additional computations (except the computation of the gradients norms), unlike the highest loss approach where the losses should be computed at each communication round. Hence, with a limited number of computations, we can achieve similar performance as with the highest loss selection. Moreover, when the number of selected devices is high, the performance of the two approaches is almost the same as depicted in Figure \ref{FM85} where the curves of the two approaches are overlapping for $85$ selected devices among $100$ in the FMNIST dataset.
\begin{figure}
\centering
\subfloat[Test Accuracy\label{testAcc85}]{%
  \includegraphics[width=0.43\textwidth]{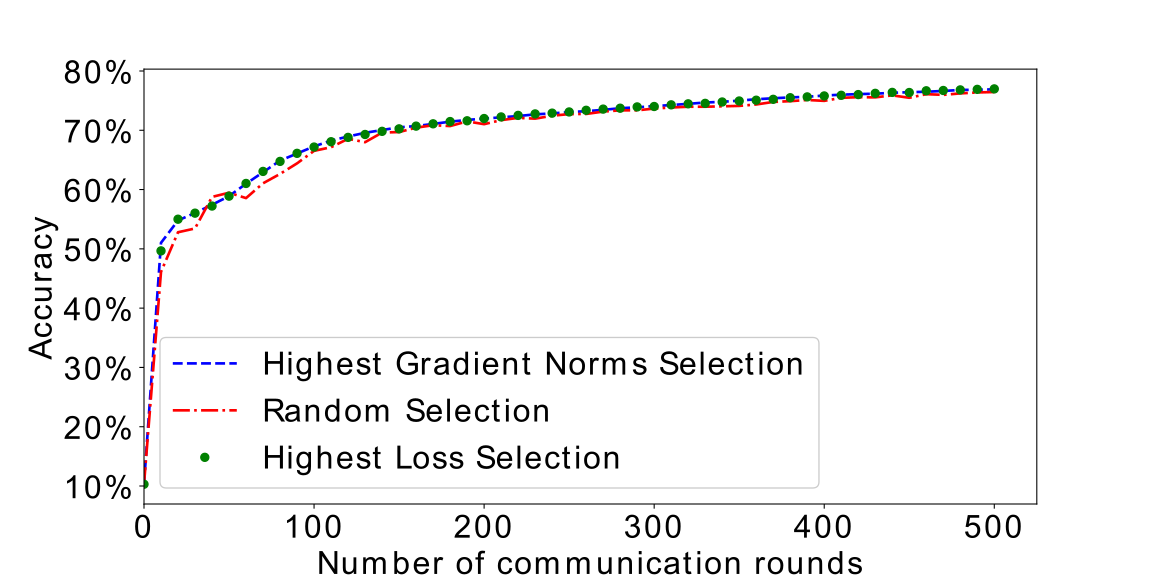}%
  }\par
\subfloat[Train Loss\label{Trainloss85}]{%
  \includegraphics[width=0.43\textwidth]{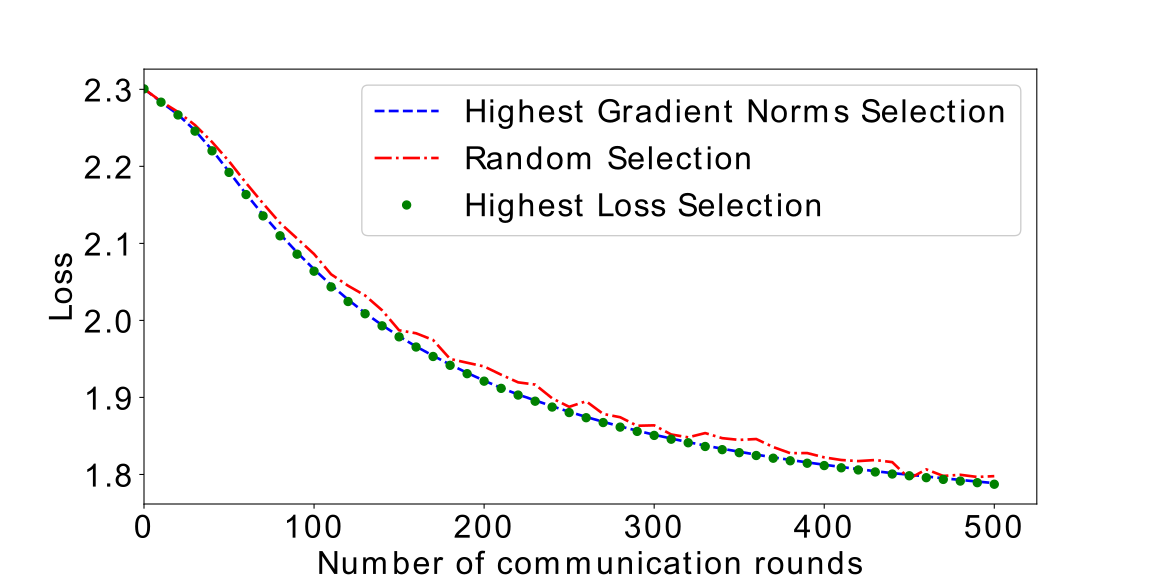}%
  }
\caption{Test accuracy and train loss on FMNIST dataset, 85 selected devices}
\label{FM85}
\end{figure}

\subsection{Optimal number of selected devices}
The performance of the selection approach is tightly related to the number of selected devices. Table \ref{tab150} and Table \ref{tab500} show the impact of the number of selected devices on the achieved test accuracy. For example, selecting one device while training MNIST dataset achieves $41\%$ accuracy at iteration $150$. $500$ rounds are needed in order to reach an accuracy of $84\%$. On the other side, by selecting $25$ devices, we can achieve the accuracy of $82\%$ after only $150$ iterations. In fact, when the number of selected devices is too small, the selected labels do not reflect the diversity of the entire data available at devices. This is more likely to happen in a highly skewed/ non-iid setup and can be the main cause of a ML model underfitting. Contrarily, selecting more devices does not necessarily lead to a  better generalization as it may cause an overfitting issue. The optimal number of selected devices depends on the neural network model and the dataset. In the studied scenarios, selecting $25$ devices provides the best results for both MNIST and CIFAR-10 datasets, while selecting $15$ devices achieves the best results for FMNIST dataset. It is worth mentioning that we also run our experiments for a higher number of selected devices ($95$) and full selection ($100$). The accuracy results are similar to those for $85$ selected devices.
\begin{table}[h]
\centering 
\begin{tabular}{c rrrrrrr} 
\hline\hline 
&\multicolumn{7}{c}{Number of selected devices} \\ [0.5ex]
\hline 
\textbf{Dataset} & 1 & 3 & 5& 15& 25& 50& 85\\ 
MNIST & 40.9 & 71.2 & 80.9& 79.4&  81.6& 80.7& 78.3\\
FMNIST & 52.1 & 62.8 & 62& 71.6& 71.5& 71.1& 70.5\\
CIFAR-10 & 10.0 & 18.2 & 32.9& 38.0& 40.4& 40.3& 40.3\\[1ex]
\hline 
\end{tabular}
\caption{Accuracy (\%) per number of selected devices at communication round 150, following the Highest Gradient Norms Selection} 
\label{tab150}
\end{table}

\begin{table}[h]
\centering 
\begin{tabular}{c rrrrrrr} 
\hline\hline 
&\multicolumn{7}{c}{Number of selected devices} \\ [0.5ex]
\hline 
\textbf{Dataset} & 1 & 3 & 5& 15& 25& 50& 85\\ 
MNIST & 83.6 & 88.6 & 90.0& 89.9&  89.9& 89.4& 88.8\\
FMNIST & 70.9 & 74.9 & 77.7& 78.1& 77.4& 77.8& 77.5\\
CIFAR-10 & 10.0 & 28.6 & 38.4& 46.7& 47.6& 47.2& 47.5\\[1ex]
\hline 
\end{tabular}
\caption{Accuracy (\%) per number of selected devices at communication round 500, following the Highest Gradient Norms Selection} 
\label{tab500}
\end{table}
\section{Conclusion}
 In this paper, we have presented an efficient method for device selection in FL by using the norms of gradients. We have provided theoretical convergence guarantees for our algorithm. Our experiments performed on multiple datasets confirm the efficiency of the proposed approach. 
In ongoing work, we will explore the combination of our selection method with gradient compression techniques e.g., Top-$k$ to further reduce communication costs. 
\bibliographystyle{IEEEbib}
\bibliography{references}


\end{document}